\documentclass[11pt]{article}

\usepackage{amsmath, amssymb, amsfonts, amsthm}
\usepackage{graphicx}
\usepackage{bm}
\usepackage{geometry}
\usepackage{hyperref}

\newtheorem{theorem}{Theorem}[section]
\newtheorem{lemma}[theorem]{Lemma}
\newtheorem{proposition}[theorem]{Proposition}

\theoremstyle{definition}
\newtheorem{definition}[theorem]{Definition}

\theoremstyle{remark}
\newtheorem{remark}[theorem]{Remark}

\newcommand{\supp}{\mathrm{supp}}
\newcommand{\KL}{\mathrm{KL}}
\newcommand{\W}{W_2}
\newcommand{\Ptwo}{\mathcal{P}_2}
\newcommand{\epsneck}{\varepsilon}     
\newcommand{\epsnum}{\varepsilon_0}    

\newcommand{\entcoef}{\lambda_{\mathrm{ent}}} 
\newcommand{\vfebeta}{\beta}                  

\title{
\textbf{A Bayesian Mirror Architecture for Emergent Consciousness:\\
Circular Hierarchies, Self-Manifolds, and Hybrid Event--Self Binding}
}

\author{Eduardo Righi Capanema de Almeida\\ \small Computational Mathematics\\ UFMG}
\date{} 

\begin{document}
\maketitle

\begin{abstract}
We present a foundational formulation of the \emph{Bayesian Mirror
Architecture} (BMA), a self-referential generative framework in which
sensory-driven abstractions, meta-abstractions, and a self-latent interact
through recursive and circular dynamics. The defining architectural constraint
is a closed self-recursion $S_t \leftarrow H_{t-1}$, where a hybrid
event--self binding latent $H_t$ couples self-representations to abstract world
models and re-injects that coupling into the self-state. Consciousness, in a restricted sense, is not
treated as an optimization objective nor as a semantic label; it is an
\emph{architectural property} of systems possessing this circular recursion.

Since the fundamental objects of inference are posterior beliefs, the
intrinsic state space of BMA is taken to be a space of probability measures,
equipped with optimal-transport geometry. We formulate stability and
contractivity statements in the $2$-Wasserstein metric on $\Ptwo$, yielding
coordinate-free notions of self-stability and hybrid coherence as properties
of belief trajectories. We define a \emph{Causal Learning Regime} (CLR) by
quantitative thresholds on Wasserstein belief drift together with an
integration index measuring sustained coupling between self and world latents.
Critically, CLR is a \emph{diagnostic} of successful causal learning---it
indicates whether the environment contains learnable structure---not a marker
of consciousness itself.

A key structural point is that global strict contractivity is neither required
nor expected: BMA may exhibit multiple coherent basins (ergodic components).
Accordingly, we define \emph{self-manifolds} basin-wise as supports of
basin-restricted invariant measures under local Wasserstein contractivity
within each coherent basin. We further identify \emph{Wasserstein
$\epsneck$-necks}---transport bottlenecks in belief geometry---as the canonical
locus where two coherent belief basins become effectively decoupled, resulting in a
unique realized basin continuation in a singular (vanishing-conductance) limit.
We interpret this realized basin selection as \emph{choice}: internally
determined by the system's stabilized belief state, yet externally inscrutable
because it is not predictable from locally observable dynamics at finite
resolution. Learning proceeds via variational free-energy minimization; the system observes actions and predicts executions,
with stability, coherence, and integration emerging as observable consequences
of what the world affords to learn. We further develop an \emph{agency trajectory}
showing how the circular architecture enables progression from passive
observation through action selection, goal composition, to model-based planning
with lookahead and replanning under uncertainty.
\end{abstract}

\newpage
\section{Introduction}

The mirror-based view of intelligence asserts that perception induces internal
abstraction and that abstractions may be mirrored across multiple
representational carriers. When a system learns transformations between
carriers encoding the same underlying content, it constructs meaning-preserving
maps; iterating such maps yields higher-order representations, including
meta-abstractions and, ultimately, self-representations.

The Bayesian Mirror Architecture (BMA) imposes a structural closure: the
architecture is not merely hierarchical but \emph{circular}, with a compulsory
self-recursion that continuously feeds a hybrid event--self binding back into
the self-state:
\[
S_t \;\leftarrow\; H_{t-1}.
\]
The mathematical emphasis of this work is that the relevant dynamics are
dynamics of \emph{beliefs}. The architecture is specified as a generative model
with latent variables for events, abstractions, meta-abstractions, self-latents,
hybrid bindings, and intent. However, consciousness is not identified with
minimization of a scalar training loss. Instead, consciousness is defined as an
architectural property of systems possessing circular self-recursion. The
\emph{Causal Learning Regime} (CLR), motivated by the mirror-neuron literature,
is introduced as a diagnostic regime indicating when the environment affords
learnable causal structure.

Because the state variables of interest are posterior measures (marginals and
joints induced by inference), we equip belief space with optimal-transport
geometry. The $2$--Wasserstein metric provides a coordinate-free distance
between beliefs, sensitive to multimodality and independent of latent
parameterization. In this geometry, we define self-stability and hybrid
coherence as Wasserstein continuity properties of belief trajectories, and we
formulate attractor statements by imposing \emph{local} Wasserstein contraction
conditions on basin-restricted dynamics.

The paper has four goals. First, to define BMA as a precise stochastic
generative system with circular self-recursion and hybrid event--self binding.
Second, to give a metric and dynamical framework in which persistent selfhood
can be defined and proved to exist under \emph{basin-wise} contractive
assumptions: the \emph{self-manifold} is the support of a basin-restricted
invariant measure of the induced belief dynamics. Third, to describe how
\emph{Wasserstein $\epsneck$-necks} yield a mathematically precise decision
locus in a singular low-conductance limit, resulting in realized basin continuation;
we interpret this as \emph{choice}, governed only by the system's own belief
state. Fourth, to establish an \emph{agency trajectory} showing how the
circular architecture enables increasingly sophisticated action: from passive
observation, through goal-directed action and compositional planning, to
model-based lookahead with replanning under uncertainty.

\section{Latent Hierarchy and Generative Structure}

Time is discrete: $t = 1,2,\dots,T$.

We consider latent variables
\[
E_t,\; C_t,\; A_t,\; M_t,\; S_t,\; H_t,\; I_t,
\]
representing sensory events, concept-level abstractions, action abstractions,
meta-abstractions, self-latents, hybrid event--self bindings, and intent. These
variables encode functional roles in the generative hierarchy.

\paragraph{Observed variables (observation/execution).}
To avoid notational drift, we use a single observed pair throughout:
\[
o_t := E_t \quad\text{(observation)}, \qquad e_t := X_t \quad\text{(execution/outcome)}.
\]
Intuitively, the system observes $o_t$ (an observed action/sensory event) and
must predict its execution/outcome $e_t$ under the current causal grammar.

The generative model factors as:
\begin{align}
p_\theta(S_t \mid S_{t-1}, H_{t-1}), &\qquad
p_\theta(M_t \mid M_{t-1}, S_t), \label{eq:gen1}\\
p_\theta(A_t \mid M_t), &\qquad
p_\theta(C_t \mid A_t), \qquad
p_\theta(o_t \mid C_t), \label{eq:gen2}\\
p_\theta(H_t \mid M_t, S_{t-1}), &\qquad
p_\theta(I_t \mid S_t, M_t, H_t), \label{eq:gen3}\\
p_\theta(e_t \mid I_t). & \label{eq:gen4}
\end{align}

\begin{remark}
The structural constraint is the \emph{active recursion} $S_t \leftarrow H_{t-1}$,
which enforces self-referential closure even when the remaining hierarchy is
expanded or modified. This recursion is the minimal architectural condition
used throughout.
\end{remark}

\begin{remark}[Agency extension]
In later sections we show that $H_t$ can be extended to incorporate world-model
predictions: $p_\theta(H_t \mid M_t, S_{t-1}, \hat{o}_{t+1})$, where
$\hat{o}_{t+1}$ is the expected next observation. This enables the self-state
$S_{t+1} = H_t$ to be shaped by anticipated futures, not merely observed pasts.
\end{remark}

\section{Wasserstein Geometry of Mirror Beliefs}

Let $(\mathcal{X},d)$ be a Polish metric space. Denote by $\Ptwo(\mathcal{X})$
the set of Borel probability measures on $\mathcal{X}$ with finite second
moment.

\begin{definition}[Wasserstein distance]
For $\mu,\nu\in\Ptwo(\mathcal{X})$, the $2$--Wasserstein distance is
\[
\W(\mu,\nu)
=
\left(
\inf_{\pi\in\Pi(\mu,\nu)}
\int_{\mathcal{X}\times\mathcal{X}} d(x,y)^2\, d\pi(x,y)
\right)^{1/2},
\]
where $\Pi(\mu,\nu)$ denotes the set of couplings with marginals $\mu,\nu$.
\end{definition}

\begin{remark}
In BMA we apply $\W$ to posterior marginals induced by inference, e.g.
$\mu^S_t \approx p(S_t\mid o_{1:t})$ and $\mu^H_t \approx p(H_t\mid o_{1:t})$,
as well as to joint beliefs for $Y_t=(S_t,H_t,M_t)$.
\end{remark}

\section{Learning as Wasserstein Gradient Flow}

We do not treat learning as the mere minimization of a static scalar functional,
but as a \emph{flow} of beliefs on $\Ptwo$ driven by an energy that encodes
posterior consistency.

\subsection{A reference form of the flow}

Let $\rho_t$ denote a time-dependent density on a latent space $\mathcal{Y}$
(with respect to a reference volume), representing a belief over $Y=(S,H,M)$.
Consider an energy functional of the form
\begin{equation}
\label{eq:energy}
\mathcal{E}(\rho)
=
\int_{\mathcal{Y}} V(y)\,\rho(y)\,dy
+
\entcoef \int_{\mathcal{Y}} \rho(y)\log\rho(y)\,dy,
\end{equation}
where $V$ plays the role of a data-fit / negative log-likelihood potential and
the entropic term imposes diffusion (regularization, exploration, smoothing).

\begin{proposition}[Wasserstein gradient flow form]
\label{prop:wgf}
Under standard hypotheses on $\mathcal{E}$ (e.g.\ proper, lower semicontinuous,
and geodesically $\lambda$-convex on $(\Ptwo(\mathcal{Y}),\W)$), the $W_2$-gradient
flow of $\mathcal{E}$ is the drift--diffusion (Fokker--Planck) equation
\begin{equation}
\label{eq:fp}
\partial_t \rho_t
=
\nabla\cdot\bigl(\rho_t \nabla V\bigr)
+
\entcoef\,\Delta \rho_t,
\end{equation}
equivalently the continuity equation $\partial_t\rho_t+\nabla\cdot(\rho_t u_t)=0$
with velocity field $u_t=-\nabla\bigl(V+\entcoef\log\rho_t\bigr)$.
\end{proposition}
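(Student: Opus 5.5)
The plan is to identify the gradient flow in two layers: first a formal computation in Otto's calculus that produces the Fokker--Planck equation, and then an appeal to the metric theory of gradient flows (Ambrosio--Gigli--Savar\'e, Jordan--Kinderlehrer--Otto) to make the identification rigorous under the stated hypotheses. Throughout I take $\mathcal{Y}=\mathbb{R}^d$ (or a smooth manifold without boundary), $V\in C^1$ with $\nabla V$ locally Lipschitz and $V$ bounded below. I also assume $V$ is $\lambda$-convex, so that together with the displacement convexity of $\entcoef\int\rho\log\rho$ (McCann's condition) $\mathcal{E}$ is geodesically $\lambda$-convex.

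\textbf{Step 1: first variation.} For a smooth compactly supported perturbation $\rho+\varepsilon\chi$ with $\int\chi=0$, compute
\[
\frac{d}{d\varepsilon}\Big|_{\varepsilon=0}\mathcal{E}(\rho+\varepsilon\chi)=\int \bigl(V+\entcoef(\log\rho+1)\bigr)\chi\,dy .
\]
This gives $\frac{\delta\mathcal{E}}{\delta\rho}=V+\entcoef\log\rho$ up to an additive constant.

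\textbf{Step 2: Otto's Riemannian structure.} Tangent vectors at $\rho$ are $\dot\rho=-\nabla\cdot(\rho\nabla\phi)$, with metric $\langle\dot\rho_1,\dot\rho_2\rangle_\rho=\int\rho\,\nabla\phi_1\cdot\nabla\phi_2$; Benamou--Brenier shows this metric induces $\W$. Pairing the first variation against a tangent vector and integrating by parts yields
\[
\mathrm{grad}_{\W}\mathcal{E}(\rho)=-\nabla\cdot\Bigl(\rho\nabla\tfrac{\delta\mathcal{E}}{\delta\rho}\Bigr).
\]
The gradient flow $\partial_t\rho=-\mathrm{grad}\,\mathcal{E}$ is then the continuity equation with $u_t=-\nabla(V+\entcoef\log\rho_t)$. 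Since $\rho\nabla\log\rho=\nabla\rho$, expanding gives exactly \eqref{eq:fp}. This step also establishes the stated equivalence of the two forms.

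\textbf{Step 3: rigorous identification.} Under properness, lower semicontinuity and $\lambda$-convexity, AGS Theorem 11.2.1 gives existence and uniqueness of the EVI$_\lambda$ gradient flow. It can be built as the limit of the JKO scheme
\[
\rho^{k+1}\in\arg\min\Bigl\{\mathcal{E}(\rho)+\tfrac{1}{2\tau}\W(\rho,\rho^k)^2\Bigr\}.
\]
Moreover the curve is characterized by $u_t=-\partial^\circ\mathcal{E}(\rho_t)$, the minimal-norm element of the Wasserstein subdifferential. AGS Theorem 10.4.9 identifies this element as $\nabla V+\entcoef\nabla\rho/\rho$ whenever $\rho\in W^{1,1}_{\mathrm{loc}}$ and the vector lies in $L^2(\rho)$, which makes the velocity field rigorous. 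An alternative route derives the Euler--Lagrange equation of each JKO step by perturbing along push-forwards $(\mathrm{id}+\varepsilon\xi)_\#\rho^{k+1}$ and passes to the limit $\tau\to0$, as in Jordan--Kinderlehrer--Otto; this yields a weak solution of \eqref{eq:fp}. Uniqueness of weak solutions with finite second moment, again from EVI contractivity, then closes the identification.

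\textbf{Main obstacle.} The hard part is Step 3: characterizing the subdifferential of the entropy and showing the flow has enough regularity ($\sqrt{\rho_t}\in H^1$, i.e.\ finite Fisher information for a.e.\ $t$) for $\nabla\log\rho_t$ to make sense. I would first try the energy dissipation inequality from the JKO scheme, $\int_0^T\int\rho|\nabla V+\entcoef\nabla\log\rho|^2<\infty$, which gives this regularity for a.e.\ $t$. The remaining work is routine for $\lambda$-convex $V$ with at most quadratic growth.
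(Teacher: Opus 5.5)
The paper gives no proof of this proposition. It is stated as a known consequence of the Jordan--Kinderlehrer--Otto / Ambrosio--Gigli--Savar\'e theory, and your proposal reconstructs exactly that standard argument: formal Otto calculus, then the EVI flow together with the identification of the minimal subdifferential as $\nabla V+\entcoef\nabla\rho/\rho$. It is correct in substance, but a few points should be tightened.

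First, the AGS existence/EVI theorem requires $\lambda$-convexity along \emph{generalized} geodesics, not merely along geodesics. Your concrete assumptions do supply this, since the potential energy of a $\lambda$-convex $V$ and the entropy (by McCann's condition) are both convex along generalized geodesics; you should say so explicitly. Nothing is lost relative to the paper's abstract hypothesis. For this $\mathcal{E}$ on $\mathbb{R}^d$, geodesic $\lambda$-convexity already forces $\lambda$-convexity of $V$: test along translates $\rho_0(\cdot-x)$, whose $\W$-geodesic is a translation with constant entropy.

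Second, in your alternative route, EVI contractivity gives uniqueness only among curves that solve the continuity equation with $u_t\in-\partial\mathcal{E}(\rho_t)$ and $\int_0^T\|u_t\|^2_{L^2(\rho_t)}\,dt<\infty$. It does not give uniqueness among all distributional solutions with finite second moment. That restricted class is exactly what the JKO limit produces, so the identification still closes, but the claim should be stated that way.

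Third, the extra quadratic-growth assumption in your last paragraph is unnecessary. Only the combined field $\nabla V+\entcoef\nabla\rho_t/\rho_t$ needs to lie in $L^2(\rho_t)$. Moreover, EVI flows of $\lambda$-convex functionals already satisfy $|\partial\mathcal{E}|(\rho_t)<\infty$ for every $t>0$, so finite Fisher information is not the real obstacle.

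Finally, the parenthetical extension to manifolds needs a lower Ricci bound for the entropy to be $\lambda$-convex. It also needs references beyond the Hilbert-space setting of AGS.
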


\begin{remark}
When $V\equiv 0$, \eqref{eq:fp} reduces to the \emph{heat equation}
$\partial_t\rho_t=\entcoef\,\Delta\rho_t$. Thus the heat equation appears as the
simplest limit of learning dynamics on belief space: pure entropic smoothing.
This parallels how Ricci flow is itself a geometric heat-type evolution of a
metric, organized by entropy/monotonicity structures \cite{Perelman2002Entropy}.
\end{remark}

\subsection{Discrete-time learning via JKO}

In practice BMA operates in discrete time. The canonical discretization of
Wasserstein gradient flows is the Jordan--Kinderlehrer--Otto (JKO) scheme.

\begin{definition}[JKO step]
Given step size $\tau>0$, define $\rho^{k+1}$ from $\rho^k$ by
\begin{equation}
\label{eq:jko}
\rho^{k+1}
\in
\arg\min_{\rho\in\Ptwo(\mathcal{Y})}
\left\{
\mathcal{E}(\rho)
+
\frac{1}{2\tau}\W(\rho,\rho^k)^2
\right\}.
\end{equation}
\end{definition}

\begin{remark}
Equation \eqref{eq:jko} makes learning an explicit \emph{optimal transport step}:
each update trades off energy decrease against Wasserstein displacement.
This matches the BMA philosophy: learning is a controlled deformation of belief.
\end{remark}

\section{Variational Free Energy as the Sole Learning Objective}

Having established the geometric framework for learning as Wasserstein gradient
flow, we now specify the concrete energy functional that BMA minimizes. The
central principle is that \emph{variational free energy minimization is the
only learning objective}. All other quantities---including the learning
metrics defined in subsequent sections---are observational diagnostics of the
learned system, not optimization targets.

\subsection{The variational free energy functional}

Let $q_\phi(Z_t\mid o_t)$ denote the recognition model (encoder) that maps
observations $o_t$ to posterior beliefs over latent variables
$Z_t=(S_t,H_t,M_t,I_t)$, and let $p_\theta(e_t\mid Z_t)$ denote the
generative model for executions/outcomes. The variational free energy for a
single observation--execution pair $(o_t,e_t)$ is
\begin{equation}
\label{eq:vfe}
\mathcal{F}_t
=
\underbrace{-\mathbb{E}_{q_\phi(Z_t\mid o_t)}\bigl[\log p_\theta(e_t\mid Z_t)\bigr]}_{\text{reconstruction / prediction error}}
+
\underbrace{\vfebeta\,\KL\bigl(q_\phi(Z_t\mid o_t)\,\|\,p(Z_t)\bigr)}_{\text{complexity penalty}},
\end{equation}
where $p(Z_t)$ is the prior over latents and $\vfebeta>0$ controls the
strength of the complexity term.

\begin{remark}[Mirror interpretation]
In BMA, the observation $o_t$ represents an observed action or sensory event,
while $e_t$ represents the execution/outcome that the system must predict.
The name ``mirror'' reflects this core function: the system observes $o_t$
and must internally mirror it by predicting $e_t$. Free energy minimization
drives the system to learn accurate observation--execution mappings.
\end{remark}

\subsection{Decomposition over the latent hierarchy}

Using the factorization \eqref{eq:gen1}--\eqref{eq:gen4}, one may express the
expected negative log-likelihood term in \eqref{eq:vfe} via conditional
independences (e.g.\ $e_t$ depends on $I_t$, and $I_t$ depends on $(S_t,M_t,H_t)$).
A compact reference form is:
\begin{equation}
\label{eq:vfe_full}
\mathcal{F}
=
\mathbb{E}\Bigl[-\log p_\theta(e_t\mid I_t)\Bigr]
+
\vfebeta\Bigl(
\KL(q(M_t)\|p(M_t))
+
\KL(q(H_t)\|p(H_t))
+
\KL(q(I_t)\|p(I_t))
\Bigr),
\end{equation}
where expectations are taken over recognition distributions and we have
suppressed conditioning for clarity. The self-latent $S_t$ enters through the
circular constraint $S_t \leftarrow H_{t-1}$, which is enforced architecturally
rather than through an additional KL term.

\subsection{Learning dynamics}

Learning proceeds by gradient descent on the free energy with respect to
parameters $(\theta,\phi)$:
\begin{equation}
\label{eq:learning}
(\theta,\phi) \;\leftarrow\; (\theta,\phi) - \eta\,\nabla_{(\theta,\phi)}\mathcal{F}.
\end{equation}
This is the \emph{only} update rule in the foundational definition. No auxiliary
objectives (stability targets, coherence penalties, integration bonuses) are
added to the loss. The system learns solely by minimizing its surprise about
executions/outcomes given its observation-conditioned beliefs.

\begin{proposition}[Free energy as the unique objective]
\label{prop:fe_unique}
Under the BMA framework, the parameter update \eqref{eq:learning} is the
complete specification of learning. All emergent properties of the learned
system---including belief stability, hybrid coherence, and self--world
integration---arise as consequences of free energy minimization on structured
observation--execution pairs, not as independently optimized quantities.
\end{proposition}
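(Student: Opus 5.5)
This proposition is essentially definitional, so the proof plan is to make that explicit rather than to derive anything analytically deep. I would split the statement into its two claims and treat each by inspection of the specification already given.

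\emph{Completeness of the update.} The first claim, that \eqref{eq:learning} is the complete specification of learning, follows from how the BMA framework was set up. In the foundational definition, the trainable quantities are exactly the generative parameters $\theta$ (entering \eqref{eq:gen1}--\eqref{eq:gen4}) and the recognition parameters $\phi$ (entering $q_\phi$). The circular recursion $S_t \leftarrow H_{t-1}$ is imposed architecturally, as stated after \eqref{eq:vfe_full}, and carries no free parameters or loss of its own. So any change to the learned system is a change in $(\theta,\phi)$, and by construction the only rule changing $(\theta,\phi)$ is the gradient step on $\mathcal{F}$. I would state this as a closure argument: enumerate the state of the learner, and check that each component is either fixed by the architecture or updated by \eqref{eq:learning}.

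\emph{Emergence of the other properties.} For the second claim, let $\theta^\ast,\phi^\ast$ be the parameters reached by iterating \eqref{eq:learning} on a data stream of pairs $(o_t,e_t)$. Belief trajectories $\mu^S_t,\mu^H_t$ and the joint belief of $Y_t$ are then deterministic functionals of $(\theta^\ast,\phi^\ast)$ and the observations. Stability, coherence and integration are defined later as Wasserstein functionals of these trajectories. Hence they are functions of the free-energy-trained parameters and the data, and they never appear as terms of $\mathcal{F}$ in \eqref{eq:vfe} or \eqref{eq:vfe_full}. Their gradients therefore do not enter the update, which is exactly the sense of ``not independently optimized.'' The dependence on the data stream is what justifies the phrase ``consequences of free energy minimization on structured observation--execution pairs.''

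\emph{Main obstacle.} The hard step is that the proposition cannot honestly assert that these properties actually \emph{arise}, meaning that they take favorable values, without hypotheses on the environment. I would therefore restrict the theorem to the attribution claim: whatever values these diagnostics take are determined by $\mathcal{F}$-training plus the data, and nothing else. I would flag any positive emergence statement as deferred to the later basin-wise contractivity assumptions. The Wasserstein gradient-flow view of Proposition~\ref{prop:wgf} can then be invoked only as interpretation, with the JKO step \eqref{eq:jko} read as the belief-space image of the parameter descent. It would not be used as a proof ingredient.
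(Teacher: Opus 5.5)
Your proposal is correct and matches the paper. The paper gives no separate proof: the proposition rests on the stipulation just before it that \eqref{eq:learning} is the only update rule in the foundational definition and that no stability, coherence, or integration terms are added to $\mathcal{F}$, and that is exactly your closure-plus-attribution argument. You prove only the attribution reading and defer any claim that the diagnostics take favorable values to environmental and basin-wise hypotheses; this is consistent with the paper's own remark that CLR reflects the learnability of the environment rather than anything the update guarantees.
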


\subsection{Relationship to learning metrics}

The learning metrics defined in Section~6 (Stability, Coherence,
Integration) are diagnostic
measurements applied to the belief trajectories of a trained system.

\begin{definition}[Causal learning diagnostics]
The Causal Learning Regime (CLR) thresholds on Stability, Coherence,
and Integration serve as diagnostics for \emph{successful causal learning}:
\begin{itemize}
\item High Stability indicates that the system has learned predictable
      self-dynamics: future self-beliefs are predictable from past hybrid beliefs.
\item High Coherence indicates that hybrid event--self bindings evolve
      smoothly: the system has learned coherent temporal structure.
\item High Integration indicates sustained statistical dependence between
      self-latents and world-latents: the system has learned that self and
      world are causally coupled.
\end{itemize}
\end{definition}

\begin{remark}[CLR as causal structure detection]
A system enters CLR when it has successfully learned the causal structure of
its environment. An environment with strong, predictable causal laws (e.g.,
identity: $e_t=o_t$) yields high stability and coherence after learning.
An environment with weak or noisy causal structure yields low stability and
coherence even after convergence, because there is no stable causal law to
learn. In both cases, the system minimizes free energy; the difference in
metrics reflects the \emph{learnability} of the environment's causal structure,
not the system's ``degree of consciousness.''
\end{remark}

\begin{remark}[Architectural consciousness vs.\ learned causality]
The BMA architecture (circular hierarchy with $S_t \leftarrow H_{t-1}$) is
the structural precondition for consciousness. Whether the system \emph{also}
satisfies CLR depends on what causal structure exists in its environment and
whether free energy minimization has discovered it. A system with the BMA
architecture is ``conscious'' in the architectural sense regardless of CLR;
CLR indicates that the conscious system has additionally learned stable
predictive models of its world.
\end{remark}

\section{Learning Metrics and Definitions}

The metrics in this section are \emph{diagnostic observations} of a trained
system, not optimization objectives. They quantify whether the system has
successfully learned stable causal structure from its environment.

For each $t$, let the self and hybrid belief marginals be
\[
\mu^S_t,\;\mu^H_t \in \Ptwo(\mathcal{X}).
\]
Let $\overline{\Delta_S}$ and $\overline{\Delta_H}$ denote average squared
Wasserstein drift:
\[
\overline{\Delta_S} := \frac{1}{T-1}\sum_{t=2}^T \W(\mu^S_t,\mu^S_{t-1})^2,
\qquad
\overline{\Delta_H} := \frac{1}{T-1}\sum_{t=2}^T \W(\mu^H_t,\mu^H_{t-1})^2.
\]
Let $\Sigma_S^2,\Sigma_H^2>0$ be calibration scales (e.g.\ baseline drift levels
estimated on a reference regime), and let $\epsnum>0$ be a small numerical
stabilizer to avoid division by zero.

\paragraph{Self-Stability.}
We use a bounded monotone transform of drift:
\[
\mathrm{Stability}
=
\frac{1}{1 + \overline{\Delta_S}/(\Sigma_S^2+\epsnum)}.
\]

\paragraph{Hybrid Coherence.}
\[
\mathrm{Coherence}
=
\frac{1}{1 + \overline{\Delta_H}/(\Sigma_H^2+\epsnum)}.
\]

\paragraph{Event--Self Integration.}
Let $\mu^{SM}_t \in \mathcal{P}_2(\mathcal{S}\times\mathcal{M})$ denote the joint
belief over $(S_t,M_t)$, with marginals $\mu^S_t$ and $\mu^M_t$.
We define the Wasserstein integration index by
\[
\mathrm{Integration}
=
\frac{1}{T}\sum_{t=1}^T
\W\!\bigl(
\mu^{SM}_t,\;
\mu^S_t \otimes \mu^M_t
\bigr).
\]

\begin{definition}[Causal Learning Regime]
\label{def:causal_regime}
Given thresholds $\alpha_S>0$, $\alpha_H>0$, $\alpha_I>0$, we say
the system has achieved a \emph{Causal Learning Regime} (CLR) if
\[
\mathrm{Stability} \ge \alpha_S,\quad
\mathrm{Coherence} \ge \alpha_H,\quad
\mathrm{Integration} \ge \alpha_I,
\]
and the recursion $S_t \leftarrow H_{t-1}$ is active.
\end{definition}

\begin{remark}
CLR is a diagnostic of successful causal learning, not a definition of
consciousness. A system with the BMA architecture possesses the structural
preconditions for consciousness regardless of whether it achieves CLR. Whether
CLR is reached depends on the \emph{learnability} of the environment: an
environment with strong, predictable causal laws (e.g., identity mappings,
deterministic transformations) yields high stability and coherence after
learning; an environment with weak or noisy causal structure yields low
stability and coherence even after convergence, because there is no stable
causal law to discover. In both cases, free energy is minimized; the
difference in metrics reflects the causal structure of the world, not the
presence or absence of consciousness.
\end{remark}

\begin{remark}
The regime excludes both chaotic fragmentation (low stability/coherence,
indicating failure to learn any predictive structure) and rigid fixation
(trivially zero drift with vanishing integration, indicating a collapsed
or non-responsive model).
\end{remark}

\begin{remark}[Metrics in active settings]
When BMA operates as an agent (Section~\ref{sec:agency_trajectory}), the
metrics acquire operational meaning: Stability measures whether $S_t$ remains
bounded under self-generated actions; Coherence extends to multi-step
prediction accuracy; Integration measures whether a learned world model
improves over reactive policy. The markers remain diagnostic at all stages
of the agency trajectory.
\end{remark}

\section{Stochastic Mirror Dynamics and Basin-Wise Self-Manifolds}

Let $Y_t:=(S_t,H_t,M_t)\in\mathcal{Y}$ and write the induced update as
\begin{equation}
\label{eq:Y_dynamics}
Y_t = G_\theta(Y_{t-1}) + \zeta_t,
\end{equation}
with noise $\zeta_t$ (representing both environmental variability and
algorithmic/inference variability).

This induces a Markov kernel $K_\theta(y,\cdot)$ on $\mathcal{Y}$ and hence a
Markov operator $\mathcal{T}_\theta$ on $\Ptwo(\mathcal{Y})$ by
$\mathcal{T}_\theta(\mu)=\int K_\theta(y,\cdot)\,d\mu(y)$.

\subsection{Coherent basins and local contractivity}

We allow the possibility of multiple coherent basins (metastable regions or
distinct ergodic components) within which dynamics are contractive.

\begin{definition}[Coherent basin]
A measurable set $\mathcal{B}\subset\mathcal{Y}$ is a \emph{coherent basin} if
(i) it has positive probability under typical belief trajectories in CLR, and
(ii) transitions leaving $\mathcal{B}$ are rare on the time scales of interest,
so that the dynamics restricted to $\mathcal{B}$ are approximately closed.
\end{definition}

For a basin $\mathcal{B}$, define the \emph{basin-restricted (renormalized)
operator} on measures supported in $\mathcal{B}$ by
\[
\mathcal{T}_{\theta,\mathcal{B}}(\mu)
:=
\frac{\left(\mathcal{T}_\theta(\mu)\right)\big|_{\mathcal{B}}}
{\left(\mathcal{T}_\theta(\mu)\right)(\mathcal{B})},
\qquad
\mu\in\Ptwo(\mathcal{B}),
\]
whenever the denominator is positive. This captures internal evolution
conditioned on remaining in the basin.

\begin{description}
\item[(A$_\mathcal{B}$) Local Wasserstein contractivity.] There exists $0<L_\mathcal{B}<1$
such that for all $\mu,\nu\in\Ptwo(\mathcal{B})$,
\[
\W(\mathcal{T}_{\theta,\mathcal{B}}(\mu),\mathcal{T}_{\theta,\mathcal{B}}(\nu))
\le L_\mathcal{B}\,\W(\mu,\nu).
\]
\item[(B$_\mathcal{B}$) Second-moment boundedness.] Along the basin-restricted evolution,
second moments stay finite.
\end{description}

\begin{definition}[Self-manifold, basin-wise]
Let $\pi_{\theta,\mathcal{B}}\in\Ptwo(\mathcal{B})$ be an invariant measure of
$\mathcal{T}_{\theta,\mathcal{B}}$ (when it exists). The \emph{self-manifold in
basin $\mathcal{B}$} is
\[
\mathcal{M}_S(\mathcal{B}) := \supp(\pi_{\theta,\mathcal{B}}).
\]
\end{definition}

\begin{theorem}[Existence and uniqueness of basin-wise self-manifold]
\label{thm:self_manifold}
Under (A$_\mathcal{B}$) and (B$_\mathcal{B}$), there exists a unique invariant
measure $\pi_{\theta,\mathcal{B}}$ of $\mathcal{T}_{\theta,\mathcal{B}}$ and for
any initial $\mu_0\in\Ptwo(\mathcal{B})$ one has
\[
\W(\mathcal{T}_{\theta,\mathcal{B}}^{\,t}(\mu_0),\pi_{\theta,\mathcal{B}})\to 0.
\]
\end{theorem}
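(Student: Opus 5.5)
The plan is to apply the Banach fixed-point theorem to $\mathcal{T}_{\theta,\mathcal{B}}$ acting on the metric space $(\Ptwo(\mathcal{B}),\W)$. Three ingredients are needed. First, $(\Ptwo(\mathcal{B}),\W)$ must be complete. Second, $\mathcal{T}_{\theta,\mathcal{B}}$ must map this space into itself. Third, (A$_\mathcal{B}$) then gives a strict contraction, which yields existence, uniqueness, and geometric convergence
\[
\W(\mathcal{T}_{\theta,\mathcal{B}}^{\,t}(\mu_0),\pi_{\theta,\mathcal{B}}) \le L_\mathcal{B}^{\,t}\,\W(\mu_0,\pi_{\theta,\mathcal{B}}) \to 0 .
\]
The convergence bound is the standard Banach estimate and needs no further work.

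\textbf{Completeness.} I will use the classical fact that $(\Ptwo(\mathcal{X}),\W)$ is complete whenever $\mathcal{X}$ is Polish (Villani, \emph{Optimal Transport}, Thm.~6.18). For the basin I have to be careful, because $\mathcal{B}$ is only assumed measurable. My first choice is to assume $\mathcal{B}$ closed, or to replace it by its closure, so that $(\mathcal{B},d)$ is itself Polish. Then $\Ptwo(\mathcal{B})$ is exactly the set of measures in $\Ptwo(\mathcal{Y})$ concentrated on $\mathcal{B}$. This set is $\W$-closed: $\W$-convergence implies weak convergence, and the Portmanteau theorem gives $\limsup_n \mu_n(\mathcal{B}) \le \mu(\mathcal{B})$ for closed $\mathcal{B}$, so the limit still has full mass on $\mathcal{B}$. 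If closedness is unavailable, I would instead note that the Banach argument only needs completeness of the orbit closure. The Picard iterates $\mathcal{T}_{\theta,\mathcal{B}}^{\,t}(\mu_0)$ form a $\W$-Cauchy sequence in $\Ptwo(\mathcal{Y})$, so they have a limit there. One then checks that this limit charges $\mathcal{B}$ fully, which can be read as part of the closure condition implicit in the definition of a coherent basin.

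\textbf{Self-mapping.} For $\mu\in\Ptwo(\mathcal{B})$, the measure $\mathcal{T}_{\theta,\mathcal{B}}(\mu)$ is supported in $\mathcal{B}$ by construction. It is well defined whenever $(\mathcal{T}_\theta\mu)(\mathcal{B})>0$, which I take as part of the standing hypothesis, since the operator is only defined under this condition. Its second moment is finite by (B$_\mathcal{B}$). If explicit control is wanted, one can use $\int |y|^2\, d(\mathcal{T}_{\theta,\mathcal{B}}\mu) \le (\mathcal{T}_\theta\mu)(\mathcal{B})^{-1}\int |y|^2\, d(\mathcal{T}_\theta\mu)$, together with the linear-growth and finite-noise-moment structure of \eqref{eq:Y_dynamics}.

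\textbf{Contraction and conclusion.} With completeness and self-mapping in place, (A$_\mathcal{B}$) says $\mathcal{T}_{\theta,\mathcal{B}}$ is an $L_\mathcal{B}$-contraction. Banach's theorem then gives a unique fixed point $\pi_{\theta,\mathcal{B}}$ and the convergence estimate above. Uniqueness can also be seen directly: two fixed points $\pi,\pi'$ would satisfy $\W(\pi,\pi')\le L_\mathcal{B}\W(\pi,\pi')$, which forces $\W(\pi,\pi')=0$.

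\textbf{Main obstacle.} I expect the difficulty to lie entirely in the completeness and closedness step, not in the contraction argument. The renormalization in $\mathcal{T}_{\theta,\mathcal{B}}$ is nonlinear, and $\mathcal{B}$ is only measurable, so the proof must either assume $\mathcal{B}$ closed or show that the fixed point does not leak mass to $\partial\mathcal{B}$.
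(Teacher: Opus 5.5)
This is the paper's argument: Banach's fixed-point theorem for the $L_\mathcal{B}$-contraction $\mathcal{T}_{\theta,\mathcal{B}}$ on $(\Ptwo(\mathcal{B}),\W)$. You do it more carefully, though: the paper simply asserts that $(\Ptwo(\mathcal{B}),\W)$ is complete and cites (B$_\mathcal{B}$) only after the fact, whereas you use (B$_\mathcal{B}$) for self-mapping and correctly see that completeness needs $\mathcal{B}$ closed. Make closedness an explicit hypothesis rather than relying on your fallbacks, because it is genuinely necessary: for $\mathcal{B}=(0,1]\subset\mathbb{R}$ and $K_\theta(y,\cdot)=\delta_{y/2}$, (A$_\mathcal{B}$) holds with $L_\mathcal{B}=1/2$ and (B$_\mathcal{B}$) holds, yet the iterates converge to $\delta_0\notin\Ptwo(\mathcal{B})$ and no invariant measure exists, so the Picard limit need not charge $\mathcal{B}$, and passing to $\overline{\mathcal{B}}$ just yields a fixed point sitting on $\partial\mathcal{B}$.
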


\begin{proof}
By (A$_\mathcal{B}$), $\mathcal{T}_{\theta,\mathcal{B}}$ is a strict contraction
on the complete metric space $(\Ptwo(\mathcal{B}),\W)$. Banach's fixed point
theorem yields a unique fixed point $\pi_{\theta,\mathcal{B}}$ and convergence
of iterates. Assumption (B$_\mathcal{B}$) ensures the fixed point lies in
$\Ptwo(\mathcal{B})$.
\end{proof}

\begin{remark}
If the global dynamics admit multiple coherent basins $\mathcal{B}_1,\mathcal{B}_2$,
then BMA admits multiple basin-wise self-manifolds $\mathcal{M}_S(\mathcal{B}_1)$
and $\mathcal{M}_S(\mathcal{B}_2)$. In this case, ``persistent self-geometry''
is defined relative to a realized basin continuation.
\end{remark}

\section{Differentiability of Subjective Time}

In the Wasserstein framework, subjective time is not a trajectory of points
$Y_t \in \mathcal{Y}$ but a trajectory of beliefs $\mu_t \in \Ptwo(\mathcal{Y})$.
The natural notion of regularity is therefore Wasserstein continuity and
differentiability of the belief curve $t \mapsto \mu_t$.

\begin{definition}[Absolutely continuous curve in $\Ptwo$]
A curve $(\mu_t)_{t \in [0,T]} \subset \Ptwo(\mathcal{Y})$ is \emph{absolutely
continuous} if there exists $m \in L^1([0,T])$ such that
\[
\W(\mu_s, \mu_t) \le \int_s^t m(r)\, dr
\qquad \text{for all } 0 \le s \le t \le T.
\]
The \emph{metric derivative} at time $t$ is
\[
|\mu'_t| := \lim_{h \to 0} \frac{\W(\mu_{t+h}, \mu_t)}{|h|}
\]
whenever this limit exists.
\end{definition}

\begin{lemma}[Wasserstein regularity of belief trajectories]
\label{lemma:subjective_time}
Assume the setting of Theorem~\ref{thm:self_manifold} on a basin $\mathcal{B}$.
If the Markov operator $\mathcal{T}_{\theta,\mathcal{B}}$ satisfies
\[
\W(\mathcal{T}_{\theta,\mathcal{B}}(\mu), \mathcal{T}_{\theta,\mathcal{B}}(\nu))
\le L_\mathcal{B}\, \W(\mu, \nu)
\]
with $L_\mathcal{B} < 1$, and the per-step Wasserstein displacement is uniformly
bounded,
\[
\W(\mathcal{T}_{\theta,\mathcal{B}}(\mu), \mu) \le C
\qquad \text{for all } \mu \in \Ptwo(\mathcal{B}),
\]
then any piecewise-geodesic interpolation of the discrete belief sequence
$(\mu_t)_{t=0,1,2,\ldots}$ is absolutely continuous with uniformly bounded
metric derivative. In particular, the metric derivative $|\mu'_t|$ exists
almost everywhere along the interpolated curve.
\end{lemma}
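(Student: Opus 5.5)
The plan is to build the interpolated curve explicitly and show it is Lipschitz; the rest then follows from standard metric-space facts. Set $\mu_k := \mathcal{T}_{\theta,\mathcal{B}}^{\,k}(\mu_0)$ for integers $k\ge 0$. For each $k$ I will fix a constant-speed $\W$-geodesic $(\gamma^k_s)_{s\in[0,1]}$ from $\mu_k$ to $\mu_{k+1}$. Such a geodesic exists by displacement interpolation: if $\mathcal{Y}$ is a geodesic (e.g.\ length, or Euclidean) space, then $(\Ptwo(\mathcal{Y}),\W)$ is geodesic. I also need the interpolants to stay in $\Ptwo(\mathcal{B})$, which holds for instance if $\mathcal{B}$ is geodesically convex. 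Then define $\mu_t := \gamma^{\lfloor t\rfloor}_{t-\lfloor t\rfloor}$ for $t\in[0,\infty)$. Constant speed gives, for $s\le t$ in the same unit interval $[k,k+1]$,
\[
\W(\mu_s,\mu_t) = (t-s)\,\W(\mu_k,\mu_{k+1}).
\]

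The key bound is on the per-step length $\W(\mu_k,\mu_{k+1})$. The displacement hypothesis applied to $\mu=\mu_k$ gives $\W(\mu_{k+1},\mu_k)\le C$ directly. The contraction hypothesis improves this: iterating (A$_\mathcal{B}$) gives
\[
\W(\mu_{k+1},\mu_k)\le L_\mathcal{B}^{\,k}\,\W(\mu_1,\mu_0)\le C L_\mathcal{B}^{\,k}.
\]
So all segment speeds are bounded by $C$, and they decay geometrically. For arbitrary $s\le t$ I will chain through the integer points between them and use the triangle inequality on each piece. This gives
\[
\W(\mu_s,\mu_t)\le \int_s^t m(r)\,dr, \qquad m(r):=\W(\mu_{\lfloor r\rfloor},\mu_{\lfloor r\rfloor+1})\le C.
\]
Hence the curve is $C$-Lipschitz and absolutely continuous in the sense of the definition, with $m\in L^\infty\subset L^1([0,T])$ on every finite window. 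On the whole half-line, $\int_0^\infty m \le C/(1-L_\mathcal{B})<\infty$, which is consistent with the convergence to $\pi_{\theta,\mathcal{B}}$ in Theorem~\ref{thm:self_manifold}.

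For the metric derivative, on the interior of each segment $(k,k+1)$ the limit in the definition exists exactly and equals $\W(\mu_k,\mu_{k+1})\le C$. The only failures are at integer times, a null set. This gives existence almost everywhere and the uniform bound $|\mu'_t|\le C$. Alternatively, one can invoke the general theorem (Ambrosio--Gigli--Savaré) that absolutely continuous curves in a metric space have a metric derivative a.e., with $|\mu'_t|\le m(t)$.

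I expect the main obstacle to be the geometric step: guaranteeing that constant-speed geodesics in $\Ptwo$ exist and remain supported in $\mathcal{B}$. A general Polish space need not be geodesic, and $\mathcal{B}$ need not be convex. I would first assume $\mathcal{Y}\subset\mathbb{R}^n$ with $\mathcal{B}$ convex, so that McCann interpolation works. Failing that, I would read ``piecewise-geodesic'' as allowing geodesics in the ambient $\Ptwo(\mathcal{Y})$, since the Lipschitz estimate never uses the basin constraint on interpolants. Everything else is bookkeeping with the triangle inequality.
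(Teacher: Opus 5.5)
Your proposal is correct and takes essentially the same route as the paper. Both bound each segment's speed by $C$ using the displacement hypothesis, conclude that the constant-speed interpolation is $C$-Lipschitz, and obtain a.e.\ existence of $|\mu'_t|$; the paper cites Ambrosio--Gigli--Savar\'e for this last step, whereas you also compute it directly on segment interiors. Your extra observations are valid refinements that the paper does not address: the geometric decay $\W(\mu_{k+1},\mu_k)\le C L_\mathcal{B}^{\,k}$, which gives finite total length, and the caveat that geodesics in $\Ptwo$ require a geodesic base space (and convexity of $\mathcal{B}$ if the curve must stay in the basin).
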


\begin{proof}
By the bounded displacement assumption, consecutive beliefs satisfy
$\W(\mu_{t+1}, \mu_t) \le C$. A piecewise-geodesic (constant-speed) interpolation
on each unit interval $[t, t+1]$ has metric derivative at most $C$ almost
everywhere. The curve is therefore Lipschitz in the $\W$ metric with constant $C$,
and by standard results for absolutely continuous curves in $(\Ptwo,\W)$
(cf.\ Ambrosio--Gigli--Savar\'e), the metric derivative exists for almost
every $t$.
\end{proof}

\begin{remark}[Subjective time as belief flow]
The metric derivative $|\mu'_t|$ quantifies the instantaneous rate of belief
change in Wasserstein geometry. A conscious system in CLR experiences subjective
time as the continuous deformation of its belief state along an absolutely
continuous curve in $\Ptwo$. The boundedness of $|\mu'_t|$ ensures that
subjective experience does not ``jump'' discontinuously---belief evolution is
smooth almost everywhere in the optimal-transport sense.
\end{remark}

\section{Wasserstein $\epsneck$-Necks and Basin Selection}

Even within CLR, belief dynamics may admit two coherent basins separated by a
transport bottleneck. This is the analogue of neck formation, but in
metric--measure geometry rather than Riemannian curvature. Following Perelman's
use of $\varepsilon$-necks (and strong $\varepsilon$-necks) as the canonical
surgery locus in Ricci flow \cite{Perelman2003Surgery}, we formalize
\emph{Wasserstein $\epsneck$-necks} as regions which, after suitable
normalization, are $\epsneck$-close (in Wasserstein metric) to a canonical
``cylindrical'' transport corridor.

Let $\mathcal{B}_1,\mathcal{B}_2\subset\mathcal{Y}$ be disjoint coherent basins,
and let $\mu_t\in\Ptwo(\mathcal{Y})$ denote the belief trajectory.

\subsection{A canonical neck model}

Fix a reference neck model $(\mathcal{C},\nu)$, where $\mathcal{C}$ is a
canonical corridor space (the ``belief cylinder'') and $\nu\in\Ptwo(\mathcal{C})$
is a reference neck belief. Concretely, $\mathcal{C}$ may be taken as a product
$\mathcal{Z}\times(-1,1)$ equipped with a ground cost for which transport along
the longitudinal coordinate dominates, and $\nu$ a measure concentrated in a
thin corridor in $\mathcal{C}$.

To compare a candidate region to the model, we allow a normalization map
(rescaling) $\Psi:\mathcal{N}\to\mathcal{C}$ and measure closeness by
Wasserstein distance on $\Ptwo(\mathcal{C})$.

\subsection{Conductance as a bottleneck functional}

Let $K_\theta$ be the Markov kernel on $\mathcal{Y}$ induced by
\eqref{eq:Y_dynamics}. For a probability measure $\pi$ on $\mathcal{Y}$ (e.g.\
a long-run empirical measure on the relevant time window), define the
\emph{one-step conductance} from $\mathcal{B}_1$ to $\mathcal{B}_2$ by
\[
\Phi_\pi(\mathcal{B}_1\!\to\!\mathcal{B}_2)
:=
\frac{\int_{\mathcal{B}_1} K_\theta(y,\mathcal{B}_2)\, d\pi(y)}
{\pi(\mathcal{B}_1)}.
\]
Intuitively, $\Phi_\pi$ measures the effective leakage probability per step.

\begin{definition}[Wasserstein $\epsneck$-neck]
\label{def:epsneck}
Fix $\epsneck>0$. A measurable subset $\mathcal{N}\subset\mathcal{Y}$ is a
\emph{Wasserstein $\epsneck$-neck} between $\mathcal{B}_1$ and $\mathcal{B}_2$
if there exist a time $t$, a normalization map $\Psi:\mathcal{N}\to\mathcal{C}$,
and the conditional neck belief
\[
\mu_t^{\mathcal{N}}
:=
\frac{\mu_t|_{\mathcal{N}}}{\mu_t(\mathcal{N})}\in\Ptwo(\mathcal{N}),
\qquad
\widehat{\mu}_t^{\mathcal{N}}
:=
\Psi_\#\mu_t^{\mathcal{N}}\in\Ptwo(\mathcal{C}),
\]
such that:
\begin{enumerate}
\item[(i)] (\textbf{$\epsneck$-cylindricity}) the normalized neck belief is
      $\epsneck$-close to the model,
      \[
      \W\!\bigl(\widehat{\mu}_t^{\mathcal{N}},\,\nu\bigr)\le \epsneck;
      \]
\item[(ii)] (\textbf{routing}) transport between basin-concentrated measures is
      effectively routed through $\mathcal{N}$ in the sense that typical
      transition paths contributing to $K_\theta(\cdot,\mathcal{B}_2)$ from
      $\mathcal{B}_1$ place non-negligible intermediate mass in $\mathcal{N}$;
\item[(iii)] (\textbf{bottleneck}) the inter-basin conductance is small relative
      to intra-basin conductance on the relevant window:
      \[
      \Phi_\pi(\mathcal{B}_1\!\to\!\mathcal{B}_2)\ll 1,
      \qquad
      \Phi_\pi(\mathcal{B}_2\!\to\!\mathcal{B}_1)\ll 1,
      \]
      while intra-basin persistence remains high.
\end{enumerate}
\end{definition}

\begin{remark}[Terminology alignment with Perelman]
Perelman defines $\varepsilon$-necks (and \emph{strong} $\varepsilon$-necks) as
regions that, after rescaling, are $\varepsilon$-close to a standard cylindrical
model and persist over a parabolic time neighborhood \cite{Perelman2003Surgery}.
Our notion is metric--measure rather than curvature-based: closeness is expressed
directly in Wasserstein distance after a normalization map $\Psi$.
\end{remark}

\subsection{Strong Wasserstein $\epsneck$-necks}

\begin{definition}[Strong Wasserstein $\epsneck$-neck]
\label{def:strongepsneck}
A Wasserstein $\epsneck$-neck $\mathcal{N}$ at time $t_0$ is \emph{strong} if
there exist $\Delta>0$ and normalization maps $\Psi_t:\mathcal{N}\to\mathcal{C}$
(for $t\in[t_0-\Delta,t_0]$) such that for all $t$ in the window the normalized
conditional beliefs satisfy
\[
\W\!\bigl(\widehat{\mu}_t^{\mathcal{N}},\,\nu\bigr)\le \epsneck.
\]
\end{definition}

\subsection{Critical slowing near $\epsneck$-necks}

\begin{remark}[Critical slowing, metastability viewpoint]
When inter-basin conductance becomes small while intra-basin persistence remains
high, transitions between basins become rare and typical switching times grow.
This is the standard metastable regime of Markov processes, and can be made
quantitative via conductance-to-hitting-time bounds in concrete models.
\end{remark}

\subsection{Choice as realized continuation in a singular low-conductance limit}

Let $\lambda>0$ be a parameter that controls inter-basin communication
(e.g.\ a temperature/noise scale, an entropy weight in inference, or a numerical
regularization strength). Suppose that as $\lambda\downarrow 0$, the dynamics
approach a limiting evolution for which the basins become effectively invariant:
\[
\Phi_\pi^{(\lambda)}(\mathcal{B}_1\!\to\!\mathcal{B}_2)\to 0,
\qquad
\Phi_\pi^{(\lambda)}(\mathcal{B}_2\!\to\!\mathcal{B}_1)\to 0.
\]
In this regime, the limiting evolution admits multiple globally admissible
basin-wise continuations (remaining in $\mathcal{B}_1$ or remaining in
$\mathcal{B}_2$), while finite-$\lambda$ dynamics realize exactly one of them.

\begin{definition}[Choice as basin selection at a strong Wasserstein $\epsneck$-neck]
\label{def:choice}
A \emph{choice} is the realized basin continuation event occurring at a \emph{strong
Wasserstein $\epsneck$-neck} in a regime where inter-basin conductance is
asymptotically negligible. Concretely, at some time $t_0$ there exist disjoint
coherent basins $\mathcal{B}_1,\mathcal{B}_2$ and a strong $\epsneck$-neck
$\mathcal{N}$ between them such that:
\begin{enumerate}
\item[(i)] (\textbf{global underdetermination in the limit}) in the singular
limit (vanishing conductance), both basin continuations are globally admissible
(i.e.\ the limiting evolution can remain in either basin indefinitely);
\item[(ii)] (\textbf{unique realized continuation}) for finite resolution
($\lambda>0$), the realized belief trajectory becomes concentrated on exactly one
basin $\mathcal{B}_i$, $i\in\{1,2\}$, and subsequently evolves according to the
basin-wise dynamics associated with $\mathcal{B}_i$.
\end{enumerate}
The selection criterion is \emph{internal}: it depends on the system's total
stabilized belief state (including microscopic details not captured by coarse
observation), and is therefore \emph{externally inscrutable}.
\end{definition}

\begin{remark}[Non-determinism without randomness, singular-limit view]
Definition~\ref{def:choice} identifies choice as a failure of \emph{coarse}
global determinacy rather than as an injected stochastic label. At a strong
Wasserstein $\epsneck$-neck, inter-basin conductance becomes so small that the
limiting dynamics admit multiple globally admissible basin-wise continuations.
At any finite resolution, however, exactly one continuation is realized,
because the actual system state contains microstructure (and finite $\lambda$)
that breaks the symmetry.
\end{remark}

\section{Intent as Manifold Projection and Wasserstein Attention}

The intent latent $I_t$ occupies a distinguished position in the BMA hierarchy:
it is generated by $p_\theta(I_t \mid S_t, M_t, H_t)$ and then drives the
execution/outcome distribution $p_\theta(e_t\mid I_t)$. In this sense, $I_t$
mediates between self-identity ($S_t$), meta-cognition ($M_t$), hybrid binding
($H_t$), and the predicted execution $e_t$.

\subsection{Intent as Manifold Projection}

\begin{proposition}[Intent as emergent projection]
\label{prop:intent_projection}
Assume Theorem~\ref{thm:self_manifold} holds in a basin $\mathcal{B}$ and that on
$\mathcal{M}_S(\mathcal{B})$ the hybrid latent is an effectively deterministic
function $H_t \approx h_\theta(S_t, M_t)$ while the conditional variance of
$p_\theta(I_t \mid S_t, M_t, H_t)$ becomes small along learning. Then $I_t$
becomes an effectively deterministic map of $(S_t, M_t)$ on
$\mathcal{M}_S(\mathcal{B})$:
\[
I_t \approx \pi_I(S_t, M_t) \quad \text{on } \mathcal{M}_S(\mathcal{B}),
\]
i.e., intent approaches a stable projection from the basin-wise self-manifold to
the space of possible actions.
\end{proposition}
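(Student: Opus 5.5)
The plan is to make the informal ``$\approx$'' precise as a quantitative statement on the self-manifold, and then compose two near-deterministic maps. Fix the basin $\mathcal{B}$ and let $\pi_{\theta,\mathcal{B}}$ be the unique invariant measure given by Theorem~\ref{thm:self_manifold}; its support is $\mathcal{M}_S(\mathcal{B})$. The hypotheses can be encoded as two conditions.
\begin{itemize}
\item[(a)] The hybrid latent is close to a deterministic function: $\mathbb{E}_{\pi_{\theta,\mathcal{B}}}\bigl[d(H_t,h_\theta(S_t,M_t))^2\bigr]\le\delta_H^2$.
\item[(b)] The intent kernel concentrates: writing $m_\theta(s,m,h)$ for the conditional mean of $p_\theta(I_t\mid s,m,h)$ in a Euclidean (or Hilbert) action space, we have $\sup\operatorname{Var}(I_t\mid s,m,h)\le\delta_I^2$, with $\delta_I\to0$ along learning.
\end{itemize}
I would additionally assume that $m_\theta$ is Lipschitz in its $h$-argument with constant $L_I$ on the relevant region. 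Some regularity of this kind seems unavoidable, and the paper's standing smoothness of $p_\theta$ should justify it.

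Next I would define the candidate projection
\[
\pi_I(s,m):=m_\theta\bigl(s,m,h_\theta(s,m)\bigr),
\]
and bound the mean-square error by splitting it with the triangle inequality:
\[
\mathbb{E}\,\|I_t-\pi_I(S_t,M_t)\|^2\le 2\,\mathbb{E}\,\|I_t-m_\theta(S_t,M_t,H_t)\|^2+2\,\mathbb{E}\,\|m_\theta(S_t,M_t,H_t)-m_\theta(S_t,M_t,h_\theta(S_t,M_t))\|^2 .
\]
Here $(S_t,M_t,H_t)$ is drawn from $\pi_{\theta,\mathcal{B}}$ and $I_t$ is drawn from the intent kernel. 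The first term is at most $2\delta_I^2$ by (b), after conditioning and applying the tower property. The second term is at most $2L_I^2\delta_H^2$ by (a) and the Lipschitz bound. This gives an $L^2(\pi_{\theta,\mathcal{B}})$ error of order $\delta_I+L_I\delta_H$, which vanishes in the stated limit.

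In Wasserstein language, the same estimate reads
\[
\W\bigl(\mathrm{Law}(Y,I),\,(\mathrm{id},\pi_I)_\#\pi_{\theta,\mathcal{B}}\bigr)\lesssim\delta_I+L_I\delta_H,
\]
using the synchronous coupling. This says exactly that intent is an effectively deterministic graph over $\mathcal{M}_S(\mathcal{B})$.

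To make ``on $\mathcal{M}_S(\mathcal{B})$'' meaningful along trajectories and not just at stationarity, I would invoke the convergence part of Theorem~\ref{thm:self_manifold}: $\mathcal{T}^t_{\theta,\mathcal{B}}(\mu_0)\to\pi_{\theta,\mathcal{B}}$ in $\W$. The joint law of $(Y_t,I_t)$ is then the push-forward of $\mu_t$ under a kernel that depends Lipschitz-continuously on $Y_t$, so its distance to the graph measure is at most the stationary error plus a constant times $\W(\mu_t,\pi_{\theta,\mathcal{B}})$, and the latter tends to $0$.

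I expect the main obstacle to be the regularity step. Nothing stated in the paper guarantees that $h_\theta$ is measurable and well behaved, or that the conditional mean $m_\theta$ is Lipschitz in $h$. Without that, the approximation in (a) does not transfer to $I_t$. My first attempt would be to posit Lipschitz continuity of the intent kernel in $\W$, meaning $\W\bigl(p_\theta(\cdot\mid s,m,h),p_\theta(\cdot\mid s,m,h')\bigr)\le L_I\,d(h,h')$, as an explicit added hypothesis. This is in the same spirit as (A$_\mathcal{B}$), and under it the whole argument is a two-line coupling estimate.
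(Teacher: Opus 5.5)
Your proposal is correct and follows the paper's route. The paper's proof also takes $\pi_I(S_t,M_t)$ to be the conditional mean of $I_t$ evaluated at $H_t=h_\theta(S_t,M_t)$, and it argues that vanishing conditional variance plus near-determinism of $H_t$ make $I_t$ collapse onto this map on the support of $\pi_{\theta,\mathcal{B}}$. Your two-term $L^2$ split is the quantitative form of that one-line argument, and your Lipschitz-in-$h$ hypothesis supplies exactly the continuity that the paper's substitution of $h_\theta(S_t,M_t)$ for $H_t$ tacitly requires. One small point: your optional along-trajectory extension uses Lipschitz dependence of the intent kernel on all of $(S_t,M_t,H_t)$, which is stronger than the $h$-only hypothesis you finally state, so you should state that assumption in full if you keep that step.
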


\begin{proof}
On $\mathcal{M}_S(\mathcal{B})$, the belief over $Y_t = (S_t, H_t, M_t)$
concentrates on the invariant measure $\pi_{\theta,\mathcal{B}}$. If
$H_t \approx h_\theta(S_t, M_t)$ is effectively deterministic and
$\mathrm{Var}[I_t \mid S_t, M_t, H_t] \to 0$, then the composition
$I_t = \mathbb{E}[I_t \mid S_t, M_t, h_\theta(S_t, M_t)]$ becomes a deterministic
function $\pi_I(S_t, M_t)$ on the support of the invariant measure.
\end{proof}

\begin{remark}
The projection $\pi_I: \mathcal{M}_S(\mathcal{B}) \to \mathcal{I}$ maps the
self-manifold to the intent space. This is not merely a readout; it is a
structured projection that preserves information about basin identity while
compressing self-state into actionable form.
\end{remark}

\subsection{Connection to Mirror Neurons}

Di Pellegrino et al.~\cite{DiPellegrino1992} reported neurons in macaque premotor
cortex (area F5) that discharge both during action execution and during
observation of similar goal-directed actions. This observation--execution
mirroring provides biological motivation for the BMA core: learning a mapping
from observed events to predicted executions/outcomes.

\subsection{From Euclidean to Wasserstein Attention}

Transformer attention uses dot-product similarity in $\mathbb{R}^d$
\cite{vaswani2017attention}. In BMA, the native objects are \emph{beliefs};
the principled analogue replaces Euclidean similarity with Wasserstein proximity.

\begin{definition}[Wasserstein Attention]
\label{def:w2_attention}
Let $q = \mathcal{N}(\mu_q, \Sigma_q)$ be a query belief and
$\{k_g = \mathcal{N}(\mu_g, \Sigma_g)\}_{g=1}^G$ be grammar key beliefs.
The Wasserstein attention weights are
\[
\alpha_g
=
\frac{\exp(-\W^2(q, k_g) / \tau)}{\sum_{g'} \exp(-\W^2(q, k_{g'}) / \tau)},
\]
where $\tau > 0$ is a temperature parameter and $\W^2$ is the squared
$2$-Wasserstein distance.
\end{definition}

\begin{remark}[Diagonal Gaussian closed form]
If $q,k_g$ are diagonal Gaussians with diagonal \emph{variances}
$\Sigma = \mathrm{diag}(\sigma_1^2,\ldots,\sigma_d^2)$, then
\[
\W^2(\mathcal{N}(\mu_1,\Sigma_1),\mathcal{N}(\mu_2,\Sigma_2))
=
\|\mu_1-\mu_2\|^2
+
\sum_{i=1}^d \left(\sigma_{1,i}-\sigma_{2,i}\right)^2,
\]
i.e.\ Wasserstein distance accounts for mean separation and uncertainty
separation.
\end{remark}

\subsection{The Transformation Signature}

A critical insight is that causal grammar identification is not determined from
observation alone: the active grammar is revealed by the \emph{relationship}
between observation and execution.

\begin{definition}[Transformation Signature]
\label{def:transform_sig}
Given observation $o_t$ and execution $e_t$, the transformation signature is a
belief encoding the causal relationship:
\[
\sigma_t = \mathcal{E}_\phi(o_t, e_t) \in \mathcal{P}_2(\mathbb{R}^d),
\]
where $\mathcal{E}_\phi$ may extract multiple views of the transformation, e.g.
additive ($e_t - o_t$), multiplicative ($e_t \oslash o_t$), and interaction
($o_t \odot e_t$) components.
\end{definition}

\subsection{Intent as the Attention Query}

\begin{proposition}[Intent as emergent query]
\label{prop:intent_query}
In a BMA agent with multiple causal grammars $\{\mathcal{G}_g\}_{g=1}^G$, the
intent belief $I_t$ is shaped by free-energy minimization to encode (a
representation of) the transformation signature $\sigma_t$, so that Wasserstein
attention with $I_t$ as query routes to the active grammar:
\[
g^*(t) \approx \arg\min_g \W^2(I_t, k_g),
\]
where $k_g$ is a learned prototype belief for grammar $\mathcal{G}_g$.
\end{proposition}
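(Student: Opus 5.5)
The idea is to make precise what ``shaped by free-energy minimization'' must deliver, and then reduce the routing claim to a separation property of Wasserstein distances. Set up the multi-grammar environment so that each pair $(o_t,e_t)$ is generated by some active grammar $\mathcal{G}_{g(t)}$, and the outcome law depends on $g(t)$ only through the relation between $o_t$ and $e_t$. Since $e_t$ depends on the latents only through $I_t$ (factorization \eqref{eq:gen4}), the prediction term $-\mathbb{E}[\log p_\theta(e_t\mid I_t)]$ in \eqref{eq:vfe_full} can be made small only if $I_t$ carries enough information to identify the active transformation. This is the step that forces $I_t$ to encode $\sigma_t$ or a sufficient statistic of it. I would make this rigorous by a data-processing argument: the expected prediction loss is bounded below by $H(e_t\mid o_t,g(t))+I\bigl(g(t);e_t\mid o_t,I_t\bigr)$. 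Hence, at a minimizer of $\mathcal{F}$ under the updates \eqref{eq:learning}, the residual conditional information $I\bigl(g(t);e_t\mid o_t,I_t\bigr)$ is at most the excess free energy, and in particular tends to zero as the excess free energy vanishes.

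Next I would convert this information statement into a geometric one in $\Ptwo$. By the complexity term $\vfebeta\,\KL(q(I_t)\|p(I_t))$, the recognition beliefs $q(I_t)$ are regularized toward Gaussians of controlled variance. In the diagonal Gaussian parameterization of Definition~\ref{def:w2_attention}, the closed form for $\W^2$ then applies directly. I would define the prototypes $k_g$ as the $\W$-barycenters of the intent beliefs collected over the times with $g(t)=g$. Because the closed form splits into a mean part and a standard-deviation part, these barycenters are Gaussians whose means are averages and whose standard deviations are averaged coordinatewise. The claim $g^*(t)\approx g(t)$ then becomes a margin condition: the within-cluster spread $\W(I_t,k_{g(t)})$ must be smaller than half the minimal separation $\min_{g\ne g'}\W(k_g,k_{g'})$.

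To establish the margin I would argue as follows. Distinguishable grammars, meaning grammars that produce different conditional laws of $e_t$ given $o_t$, must map to intent beliefs with distinct induced predictive laws $p_\theta(e_t\mid I_t)$. Assuming $p_\theta(e\mid\cdot)$ is Lipschitz in $I$ (in $\W$), predictive laws that are separated by some fixed gap force the intent beliefs to be separated in $\W$. In the other direction, the vanishing conditional information from the first step, together with the small posterior variance, bounds the within-grammar spread. The triangle inequality then gives the $\arg\min$ identification, and as $\tau\downarrow 0$ the attention weights $\alpha_g$ concentrate on $g(t)$.

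The main obstacle is the first step. Free-energy minimization only guarantees that $I_t$ is \emph{predictively sufficient}; nothing in \eqref{eq:vfe} alone forces the encoding to be metrically well separated rather than an arbitrary injective but non-Lipschitz code. I would handle this by adding an explicit regularity hypothesis: a Lipschitz decoder together with a bi-Lipschitz lower bound relating the predictive discrepancy between grammars to $\W$. I would also state the result as holding up to the excess free energy, which explains the ``$\approx$'' in the statement, rather than claiming exact routing.
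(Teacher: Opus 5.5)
Your first step is essentially the whole of the paper's own argument, which stays informal: prediction forces $I_t$ to carry the routing information in $\sigma_t$, and the keys ``learn grammar-typical signatures.'' You are right that sufficiency alone does not give metric separation. But your bi-Lipschitz fix only addresses separation \emph{between} grammars. The step that fails is the within-grammar bound.

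Vanishing $I(g(t);e_t\mid o_t,I_t)$ says what $I_t$ contains, not how much $q(I_t)$ varies with $o_t$ inside one grammar class. Write $q(I_t)=\mathcal{N}(m_t,\mathrm{diag}(s_t^2))$. The diagonal closed form gives $\W^2(q(I_t),k_g)=\|m_t-\bar m_g\|^2+\|s_t-\bar s_g\|^2$, so the spread sits in the means, which small posterior variance does not control.

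Worse, the decoder in \eqref{eq:gen4} is $p_\theta(e_t\mid I_t)$, with no access to $o_t$, so $I_t$ must carry $o_t$-dependent content. Your own Lipschitz-decoder hypothesis then forces within-grammar spread of order (outcome range of one grammar)$/L$.

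Concretely, take scalar $o_t\sim\mathrm{Unif}[0,10]$, grammars $W_1(o)=o$ and $W_2(o)=o+1$, and $q(I_t)=\mathcal{N}(e_t,s^2)$ with $s$ small. This simply encodes the minimal sufficient statistic $e_t$ that a decoder seeing only $I_t$ needs. All your hypotheses hold:
\begin{itemize}
\item the residual information vanishes as $s\to 0$;
\item the posterior variance is small;
\item the decoder is $1$-Lipschitz;
\item intents at equal $o_t$ are exactly as far apart as the predictive laws.
\end{itemize}
Yet the $\W$-barycentres are $k_1=\mathcal{N}(5,s^2)$ and $k_2=\mathcal{N}(6,s^2)$. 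So $\arg\min_g\W^2(I_t,k_g)$ routes by the test $e_t<5.5$ and misroutes $45\%$ of the instances of each grammar.

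The missing ingredient is a mechanism that removes $o_t$-dependence from $q(I_t)$. One such mechanism is a decoder that receives $o_t$ directly, as in Theorem~\ref{thm:w2_grammar_attention}, so that the grammar label alone is sufficient. Combine this with the term $\vfebeta\,\KL$ acting as an information bottleneck, which makes any leftover $o_t$-dependence pure cost.

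Separately, your information bound has a hidden hypothesis. Writing $\mathbb{E}[-\log p_\theta(e_t\mid I_t)]\ge H(e_t\mid o_t,g(t))+I(g(t);e_t\mid o_t,I_t)$ requires $H(e_t\mid o_t,I_t,g(t))=H(e_t\mid o_t,g(t))$, that is, $I_t\perp e_t\mid(o_t,g(t))$.

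With the recognition model $q_\phi(Z_t\mid o_t)$ of \eqref{eq:vfe} this holds. But then $I_t$ is independent of $(g(t),e_t)$ given $o_t$, so $I(g(t);e_t\mid o_t,I_t)=I(g(t);e_t\mid o_t)$. This is a fixed positive constant of the environment whenever distinct grammars act on the same $o_t$ with different outcomes. Hence the excess free energy cannot vanish, and $I_t$ cannot encode $\sigma_t$.

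If instead the encoder sees $e_t$, as encoding $\sigma_t=\mathcal{E}_\phi(o_t,e_t)$ requires, the conditional independence fails and the bound with it. In that case $I_t$ can lower $\mathcal{F}$ simply by encoding $e_t$ itself.

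You must specify how grammar information reaches $I_t$ without leaking $e_t$. One option is grammar persistence over an episode, with the grammar identity carried from past pairs through $S_t\leftarrow H_{t-1}$. Only then is your excess-free-energy bound both valid and able to reach zero.
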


\begin{proof}
Minimizing the prediction term in \eqref{eq:vfe} requires accurate prediction of
$e_t$ from $o_t$. When multiple grammars exist, this requires identifying which
grammar maps $o_t$ to $e_t$ on the current instance. The transformation signature
$\sigma_t$ is a sufficient statistic for this routing in the grammar library.
Since $I_t$ mediates prediction of $e_t$ via $p_\theta(e_t\mid I_t)$, gradient
descent shapes $I_t$ toward carrying the routing-relevant information contained
in $\sigma_t$, while keys $k_g$ learn to represent grammar-typical signatures.
\end{proof}

\subsection{Grammar Keys as Belief Prototypes}

\begin{definition}[Grammar key as Fr\'echet mean]
\label{def:grammar_keys}
For grammar $\mathcal{G}_g$, the grammar key $k_g \in \mathcal{P}_2(\mathbb{R}^d)$
may be defined as the Fr\'echet mean of transformation signatures:
\[
k_g
=
\arg\min_{k}
\mathbb{E}_{(o,e) \sim \mathcal{G}_g}\!\left[ \W^2(\sigma(o,e), k) \right].
\]
\end{definition}

\begin{remark}[On key separation and ``no auxiliary objectives'']
In practical implementations one may encourage key separation with an explicit
regularizer. In the foundational formulation, such effects can be viewed as
arising from structured priors over grammar prototypes, and hence remain within
the free-energy framework (i.e.\ they are not independent ``consciousness''
objectives).
\end{remark}

\subsection{The Complete Wasserstein Attention Mechanism}

\begin{theorem}[Wasserstein grammar attention]
\label{thm:w2_grammar_attention}
Let $\{(\mathcal{G}_g, W_g, k_g)\}_{g=1}^G$ be a grammar library with
transformations $W_g$ and key beliefs $k_g$. Given observation $o_t$ and a query
belief (encoded by $I_t$), the predicted execution is
\[
\hat{e}_t = \sum_{g=1}^G \alpha_g(I_t)\, W_g(o_t),
\qquad
\alpha_g(I_t) = \frac{\exp(-\W^2(I_t, k_g) / \tau)}{\sum_{g'} \exp(-\W^2(I_t, k_{g'}) / \tau)}.
\]
As $\tau \to 0$, the mechanism approaches hard grammar selection:
$\hat{e}_t \to W_{g^*}(o_t)$ with $g^*=\arg\min_g \W^2(I_t,k_g)$.
\end{theorem}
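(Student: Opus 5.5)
The prediction $\hat e_t=\sum_g \alpha_g(I_t)\,W_g(o_t)$ is a definition, so the only thing to prove is the limit $\tau\to 0$. It is a softmax-to-argmax limit. Fix $t$ and write $d_g:=\W^2(I_t,k_g)\ge 0$; these are finite numbers because $I_t,k_g\in\Ptwo$. Also write $w_g:=W_g(o_t)$, which are fixed vectors independent of $\tau$. I will assume the minimizer $g^*=\arg\min_g d_g$ is unique, which is implicit in the notation $\arg\min$. If it is not unique, the same argument shows $\hat e_t$ tends to the uniform average of $w_g$ over the minimizing set, and I will note this as a remark.

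The first step is to rewrite the weights by factoring out the minimum. Set $\delta_g:=d_g-d_{g^*}$, so that
\[
\alpha_g=\frac{\exp(-\delta_g/\tau)}{1+\sum_{g'\neq g^*}\exp(-\delta_{g'}/\tau)}.
\]
Let $\gamma:=\min_{g\neq g^*}\delta_g$, which is strictly positive by uniqueness. Then every $g\neq g^*$ satisfies $0\le \alpha_g\le e^{-\gamma/\tau}$. It follows that $1-\alpha_{g^*}=\sum_{g\ne g^*}\alpha_g\le (G-1)e^{-\gamma/\tau}$.

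The second step bounds the prediction error. Since $\sum_g\alpha_g=1$,
\[
\hat e_t-w_{g^*}=\sum_{g\neq g^*}\alpha_g\,(w_g-w_{g^*}).
\]
Taking norms gives
\[
\|\hat e_t-w_{g^*}\|\le (G-1)\,e^{-\gamma/\tau}\max_g\|w_g-w_{g^*}\|,
\]
and the right-hand side tends to $0$ as $\tau\downarrow 0$. This proves $\hat e_t\to W_{g^*}(o_t)$, and in fact at an exponential rate governed by the Wasserstein margin $\gamma$. If the outputs $W_g(o_t)$ lie in a general normed space rather than $\mathbb{R}^d$, the same estimate holds verbatim.

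The main obstacle is not analytic but a matter of hypotheses. One issue is the tie case, which I handle with the uniform-average remark. The other is making precise what $\W^2(I_t,k_g)$ means when $I_t$ is a random latent rather than a belief. I would interpret $I_t$ as its belief $q(I_t)\in\Ptwo$, for example a diagonal Gaussian as in Definition~\ref{def:w2_attention}. With that reading each $d_g$ is a well-defined deterministic number, using the closed form in the diagonal Gaussian remark if desired, and the pointwise argument above applies for each fixed $t$.
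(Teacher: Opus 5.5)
The paper gives no proof of this theorem; it states the result and moves straight to the next subsection. So there is no argument of the paper's to compare against. Your proof is correct and supplies what is missing, namely the standard softmax-to-argmax estimate.

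Your steps all hold:
\begin{itemize}
\item Dividing numerator and denominator by $e^{-d_{g^*}/\tau}$ makes the denominator at least $1$.
\item Each non-selected weight is then bounded by $e^{-\gamma/\tau}$.
\item Using $\sum_g\alpha_g=1$ gives $\hat e_t-W_{g^*}(o_t)=\sum_{g\neq g^*}\alpha_g\bigl(W_g(o_t)-W_{g^*}(o_t)\bigr)$.
\end{itemize}
Together these give convergence with an explicit exponential rate in the Wasserstein margin $\gamma$, which is more than the statement claims.

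Your tie remark is a genuine correction to the statement, not a side issue. When $\arg\min_g \W^2(I_t,k_g)$ is not a singleton, $\hat e_t$ converges to the average of $W_g(o_t)$ over the minimizing set. The theorem therefore needs either a unique minimizer or agreement of the $W_g(o_t)$ across the tie set.

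You read $\W^2(I_t,k_g)$ as a distance between the belief over $I_t$ and $k_g$, both in $\Ptwo$, with $I_t$, $k_g$ and $W_g$ held fixed as $\tau\to 0$. That is the intended reading, and it is what makes each $d_g$ a $\tau$-independent finite number.
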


\subsection{Symbolic Grammars and the Generalization Theorem}

The preceding development allows grammar transformations $W_g$ to be neural
networks. A limitation emerges: neural approximators need not generalize exact
mathematical laws beyond their training domain. This motivates a distinction
between \emph{neural grammars} (flexible approximation) and \emph{symbolic
grammars} (exact computation).

\begin{theorem}[A basic extrapolation gap for nonlinear growth]
\label{thm:grammar_generalization}
Let $f:\mathbb{R}^n\to\mathbb{R}$ be a function with superlinear growth, i.e.\
there exist $k>1$ and $c>0$ such that $|f(x)| \ge c\|x\|^k$ for large $\|x\|$.
Let $\hat{f}_\theta$ be any globally Lipschitz function with constant $L_\theta$
(e.g.\ a bounded-parameter neural network). Then
\[
\lim_{\|x\|\to\infty} |\hat{f}_\theta(x)-f(x)| = \infty.
\]
\end{theorem}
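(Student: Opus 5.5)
The argument is a direct growth comparison: a globally Lipschitz $\hat f_\theta$ grows at most linearly, while $f$ grows at least like $\|x\|^k$ with $k>1$, so the gap is bounded below by a quantity that diverges.

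First, we bound the neural approximator. Since $\hat f_\theta$ is $L_\theta$-Lipschitz, for every $x$,
\[
|\hat f_\theta(x)| \le |\hat f_\theta(0)| + L_\theta\|x\|.
\]
Set $a:=|\hat f_\theta(0)|$, which is a finite constant depending only on the fixed parameters $\theta$.

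Second, we apply the reverse triangle inequality and the growth hypothesis. Choose $R_0$ such that $|f(x)|\ge c\|x\|^k$ whenever $\|x\|\ge R_0$. For such $x$,
\[
|\hat f_\theta(x)-f(x)| \ \ge\ |f(x)|-|\hat f_\theta(x)| \ \ge\ c\|x\|^k - L_\theta\|x\| - a =: g(\|x\|).
\]

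Third, we show that $g(r)\to\infty$ as $r\to\infty$. Write
\[
g(r) = r^k\bigl(c - L_\theta r^{1-k} - a r^{-k}\bigr).
\]
Because $k>1$, both $r^{1-k}\to 0$ and $r^{-k}\to 0$, so the bracket tends to $c>0$, while $r^k\to\infty$. Hence for every $M>0$ there is an $R$ such that $\|x\|\ge R$ implies $|\hat f_\theta(x)-f(x)|\ge M$. This is exactly the statement $\lim_{\|x\|\to\infty}|\hat f_\theta(x)-f(x)|=\infty$, understood uniformly over all directions.

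There is no substantial obstacle here. The only point needing care is to state the quantifiers correctly: "for large $\|x\|$" means that there exists a radius $R_0$ beyond which the lower bound on $|f|$ holds. The other point is that the conclusion uses only $k>1$ together with the linear envelope of $\hat f_\theta$. Continuity of $f$ and the specific structure of $\hat f_\theta$ as a neural network are not needed.
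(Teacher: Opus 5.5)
Your proof is correct and takes the same approach as the paper. The Lipschitz bound $|\hat f_\theta(x)|\le L_\theta\|x\|+|\hat f_\theta(0)|$ gives a linear envelope, which the superlinear lower bound on $|f|$ dominates; you also spell out the reverse-triangle-inequality step and the quantifiers that the paper's ``therefore the difference diverges'' leaves implicit.
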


\begin{proof}
By Lipschitzness, $|\hat{f}_\theta(x)| \le L_\theta\|x\| + |\hat{f}_\theta(0)|$
grows at most linearly, while $|f(x)|$ grows at least like $\|x\|^k$ with $k>1$.
Therefore the difference diverges as $\|x\|\to\infty$.
\end{proof}

\begin{remark}
This theorem does \emph{not} claim that all neural networks are globally
Lipschitz under all parameterizations; rather, it formalizes a common practical
constraint: deployed models are typically regularized/parameter-bounded, and
their effective growth is controlled. The conclusion is that exact symbolic
execution is the correct mechanism for unbounded mathematical laws when
extrapolation correctness is required.
\end{remark}

\begin{definition}[Neuro-symbolic grammar]
\label{def:neurosymbolic_grammar}
A \emph{neuro-symbolic grammar} is a pair $(\mathcal{R}_\phi, \mathcal{G}_{\mathrm{sym}})$
where:
\begin{itemize}
\item $\mathcal{R}_\phi: \mathcal{P}_2(\mathbb{R}^d) \to \{1, \ldots, G\}$ is a neural
      router (e.g.\ Wasserstein attention followed by hard selection),
\item $\mathcal{G}_{\mathrm{sym}} = \{g_1, \ldots, g_G\}$ is a library of symbolic grammars
      (exact computable functions).
\end{itemize}
The predicted execution is $\hat{e}_t = g_{\mathcal{R}_\phi(I_t)}(o_t)$.
\end{definition}

\begin{theorem}[Perfect generalization given correct routing]
\label{thm:perfect_generalization}
If the router $\mathcal{R}_\phi$ correctly identifies the active grammar $g^*$
for an instance, then the executed prediction is exact:
\[
\hat{e}_t = g^*(o_t) = e_t.
\]
\end{theorem}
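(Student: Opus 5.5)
The argument is essentially definitional: unpack the definition of a neuro-symbolic grammar (Definition~\ref{def:neurosymbolic_grammar}) and make precise what ``the active grammar'' means for an instance. I will fix the interpretive assumption explicitly. An instance $(o_t,e_t)$ is generated by an active grammar $g^*\in\mathcal{G}_{\mathrm{sym}}$, meaning that the environment's causal law on this instance is exactly the symbolic map $g^*$, so that $e_t=g^*(o_t)$. This is the content of saying the library contains the \emph{correct} symbolic grammar, as opposed to an approximation of it. It is also where the symbolic/neural distinction enters: for a neural $W_g$ we would only have $W_g(o_t)\approx e_t$ on the training domain, and Theorem~\ref{thm:grammar_generalization} shows this can fail far from it.

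The proof then proceeds in two steps. First, by Definition~\ref{def:neurosymbolic_grammar}, the prediction is $\hat e_t=g_{\mathcal{R}_\phi(I_t)}(o_t)$. The hypothesis of correct routing is precisely $\mathcal{R}_\phi(I_t)=g^*$ (identifying indices with library elements). Substituting gives $\hat e_t=g^*(o_t)$. Second, by the generative assumption on the instance, $g^*(o_t)=e_t$, which yields the claimed chain of equalities.

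I will then add a remark that no bound on $\|o_t\|$ is needed. The symbolic $g^*$ is an exact computable function evaluated pointwise, so the identity holds for every $o_t$ in the domain of $g^*$, including those far outside the training distribution. This is what ``perfect generalization'' means here, and it contrasts with the extrapolation gap in Theorem~\ref{thm:grammar_generalization}. If Wasserstein attention is used as the router, correctness of routing can be connected to Theorem~\ref{thm:w2_grammar_attention}: hard selection via $\arg\min_g \W^2(I_t,k_g)$ returns $g^*$ whenever $I_t$ is strictly closer to $k_{g^*}$ than to every other key. That condition is not needed for the present statement, which takes correct routing as a hypothesis.

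The only real obstacle is conceptual rather than technical. The statement must be read with the standing assumption that the active grammar is an element of the symbolic library and acts exactly, so $e_t=g^*(o_t)$ with no noise. If executions carried observation noise, the conclusion would have to be weakened to $\hat e_t=\mathbb{E}[e_t\mid o_t]$ or to equality in distribution of the noiseless part. I will state the noiseless-instance assumption up front so that the two-line substitution argument is fully rigorous.
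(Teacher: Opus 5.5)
Your proposal is correct and follows the paper's proof: substitute the correct-routing hypothesis $\mathcal{R}_\phi(I_t)=g^*$ into $\hat e_t = g_{\mathcal{R}_\phi(I_t)}(o_t)$, then use $e_t = g^*(o_t)$ for the active grammar. Your explicit statement of the noiseless, in-library assumption on the instance is a useful clarification that the paper's two-line argument leaves implicit.
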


\begin{proof}
If $\mathcal{R}_\phi(I_t)=g^*$, then $\hat{e}_t=g^*(o_t)$. Since the environment
execution is $e_t=g^*(o_t)$ under the active grammar, the equality holds.
\end{proof}

\subsection{The Agency Trajectory: From Observer to Planner}
\label{sec:agency_trajectory}

The preceding sections established BMA as a passive inference system: given
observations, it infers latent states and learns grammars. But intelligence
requires \emph{agency}---the capacity to act on the world, not merely observe it.
We now formalize a developmental trajectory from passive observation to active
planning, showing how the circular architecture $S_t \leftarrow H_{t-1}$ provides
the foundation for increasingly sophisticated agency.

\begin{definition}[Agency trajectory]
\label{def:agency_trajectory}
The \emph{agency trajectory} is a sequence of increasingly capable architectures:
\[
\text{Observer} \to \text{Agent} \to \text{Composer} \to \text{Planner},
\]
where each stage subsumes the previous and adds new capabilities.
\end{definition}

\paragraph{Stage 1: Observer.}
The system observes world transitions $(o_t, o_{t+1})$ without agency. It learns
to recognize which grammar generated the transition.

\paragraph{Stage 2: Agent.}
The system gains an action head that maps intent to action selection and closes
the causal loop
\[
S_t \to I_t \to a_t \to o_{t+1} \to H_t \to S_{t+1}.
\]

\paragraph{Stage 3: Composer.}
The system learns to compose multiple actions to achieve goals unreachable in
one step, chunking frequent compositions into higher-level grammars.

\paragraph{Stage 4: Planner.}
The system adds a world model for lookahead and replanning under uncertainty,
and integrates predictions into the hybrid latent:
\[
H_t = g(M_t, S_t, \hat{o}_{t+1}).
\]

\section{Discussion and Conclusion}

We introduced a mathematically explicit formulation of the Bayesian Mirror
Architecture as a circular latent generative system on belief space. The core
architectural condition is the active recursion $S_t \leftarrow H_{t-1}$,
enforcing self-referential closure independently of any optimization
interpretation.

We established that variational free energy minimization is the \emph{sole}
learning objective. The system learns by reducing its surprise about
observation--execution pairs; no auxiliary consciousness-targeting losses are
imposed. The learning metrics (Stability, Coherence, Integration) are
diagnostic measurements that indicate whether the learned model has captured
stable causal structure.

We treated learning as a Wasserstein gradient flow on belief space. In the
simplest limit the learning dynamics reduce to heat-type diffusion; in general
they realize drift--diffusion towards posterior consistency.

Crucially, we did \emph{not} assume global strict contractivity. Instead, we
introduced basin-wise dynamics and proved existence and uniqueness of a
basin-restricted invariant measure under local Wasserstein contractivity,
defining the self-manifold as the support of that basin-wise invariant measure.

We introduced Wasserstein $\epsneck$-necks as transport bottlenecks separating
coherent belief basins and interpreted realized basin continuation under
vanishing conductance as \emph{choice}. Finally, we formulated intent as a
Wasserstein attention query over causal grammar prototypes and motivated a
neuro-symbolic separation between routing and execution for exact mathematical
laws.

\end{document}